\newtheorem{proposition}{Proposition}
\newtheorem{definition}{Definition}
\newtheorem{proof}{Proof}
\newcommand{\cmark}{\ding{51}}
\newcommand{\xmark}{\ding{55}}
\def\JJ{{\bf J}}
\def\U{{\mathbf U}} 
\def\V{{\mathbf V}} 
\def\A{{\mathbf \Lambda}}
\def\A{{\bf A}}
\def\I{{\bf I}}
\def\U{{\bf U}}
\def\W{{\bf W}}
\def\S{{\cal S}}
\def\N{{\cal N}}
\def\G{{\cal G}}
\def\V{{\cal V}}
\def\E{{\cal E}}
\def\F{{\cal F}}
\def \v{{\bf vec}} 
\def \J{{E}}
\def\tr{{\bf Tr}} 
\def\D{{\bf D}}
\def\M{{\bf M}}
\DeclarePairedDelimiter\floor{\lfloor}{\rfloor}
\title{Skeleton-based Hand-Gesture Recognition with Lightweight Graph Convolutional Networks}
\author{Hichem Sahbi \\ CNRS Sorbonne University}
\begin{document}
\maketitle
\begin{abstract}
  Graph convolutional networks (GCNs) aim at extending deep learning to arbitrary irregular domains, namely graphs. Their success is highly dependent on how the topology of input graphs is defined and most of the existing GCN architectures rely on predefined or handcrafted graph structures.\\ 
  In this paper, we introduce a novel method that learns the topology (or connectivity) of input graphs as a part of GCN design. The main contribution of our method resides in building an orthogonal connectivity basis that optimally aggregates nodes, through their neighborhood, prior to achieve convolution. Our method also considers a stochasticity criterion which acts as a regularizer that makes the learned basis and the underlying GCNs lightweight while still being highly effective. Experiments conducted on the challenging task of skeleton-based hand-gesture recognition show the high effectiveness of the learned GCNs w.r.t. the related work. \\
  
  {\noindent {\bf Keywords.} Graph convolutional networks, lightweight connectivity design, skeleton-based hand-gesture recognition.}
\end{abstract}

\section{Introduction}
\label{sec:intro}

Deep learning is currently witnessing a major interest in different fields including image processing and pattern recognition \cite{Cun2015}.  Its principle consists in learning multi-layered convolutional, pooling and fully connected operations that extract representations which capture low, mid and high-level characteristics of patterns while maximizing their classification performances. Most of the existing deep learning architectures \cite{Krizhevsky2012,SzegedyCVPR2015,He2016,HuangCVPR2017,He2017,ross2015,javad2017,zaremba2014,zhang2020,indola2016,Chen2017,Long2015,indola2014,Jiu2015,Jiu2016a,Jiu2017,JiuPR2019,icassp2017b,sahbiiccv17} are targeted to vectorial data; i.e., data sitting on top of regular domains including images.  However, other data require extending deep learning to irregular domains (namely graphs  \cite{zaremba2014,Bruna2013,Defferrard2016,Huang2018,Kipf2016,Sahbi2011,Sahbi2013icvs,Sahbi2015,sahbicbmi08
}) such as skeletons in action recognition. While convolutional operations on regular domains are well defined, their extension to irregular ones (i.e., graphs) is generally ill-posed and remains a major challenge. \\

\indent Two different categories of GCNs exist in the literature, spatial and spectral \cite{Gori2005,Micheli2009,Scarselli2008,Wu2019,Hamilton2017,zhang2020}. Spatial methods achieve node aggregations prior to apply convolutions using inner products while spectral techniques rely on the well defined graph Fourier transform \cite{Bruna2013,Defferrard2016,Henaff2015,Kipf2016,Levie2018,Li2018,Zhuang2018,Chen2018,Huang2018}. Whereas spatial methods are known to be effective compared to spectral ones, their success is highly dependent on the topology of input graphs, and most of the existing solutions rely on handcrafted or predefined graph structures using similarities or the inherent properties of the targeted applications \cite{WangICLR2018,Loukas2020,Atamna,sahbiicip09,lingsahbiicip2014,lingsahbieccv2014,sahbiicassp13a,lingsahbi2013,sahbipr2012}  (e.g.,  node relationships in social networks, edges in 3D modeling, etc). These structures are usually powerless to capture the most prominent relationships between nodes as their design is agnostic to the targeted application.  For instance, when considering node relationships in skeletons, these links capture the anthropometric characteristics of individuals which are useful for their identification, while other connections, yet to infer, are important for recognizing their actions. Hence, in spite of being relatively effective, the potential of these GCN methods is not fully explored as the setting of their graphs is either oblivious to the tasks at hand or achieved using the tedious cross validation. \\

\indent Graph inference  is generally ill-posed, NP-hard  \cite{Sandeep2019, Hanjun2017,Marcelo2018} and most of the existing approaches rely on constraints (similarity, smoothness, sparsity, band-limitedness, etc. \cite{Belkin2003,dong18,Daitch,Sardellitti16,LeBars19,Sardellitti19,Valsesia18,Kalofolias,Egilmez,Chepuri,Dong2016}) for its conditioning~\cite{dong18,LeBars19,Sardellitti19,Chepuri,Dong2016,Pasdeloup2017,Thanou2017}. Particularly in GCNs, recent advances aim at defining graph topology that best fits a given task \cite{Yaguang19,Kipf18,Ferran18,Alet18,Luca19,Chen20,Li18,Chenyi18}. For instance, \cite{Luca19} proposes a graph network for semi-supervised classification that learns graph topology with sparse structure given a cloud of points; node-to-node connections are modeled with a joint probability distribution on Bernoulli random variables whose parameters are found using bi-level optimization. A computationally more efficient variant is introduced in \cite{Chen20} using a weighted cosine similarity and edge thresholding. Other solutions make improvement w.r.t. the original GCNs \cite{Kipf2016} by exploiting symmetric matrices \cite{Li18} and discovering hidden structural relations (unspecified in the original graphs), using a so-called residual graph adjacency matrix and by learning a distance function over nodes. The work in \cite{Chenyi18} introduces a dual architecture with two parallel graph convolutional layers sharing the same parameters, and considers a normalized adjacency and a positive point-wise mutual information matrix to capture node co-occurrences through random walks sampled from graphs. \\

\indent In this paper, we introduce  a novel framework that designs graphs as a part of end-to-end GCN learning. Our design principle is based on the minimization of a constrained loss  whose solution corresponds not only to the convolutional parameters of GCNs but also the underlying  adjacency matrices that capture the topology of input graphs. Our contribution in this paper differs from the aforementioned  related work in multiple aspects; on the one hand, in contrast to many existing methods -- e.g., \cite{YLi2018} which consider a single adjacency matrix shared through power series -- the matrix operators designed in our contribution are non-parametrically learned and this provides more flexibility to our design. On the other hand, constraining these matrices, through orthogonality and stochasticity,  allows achieving structured regularization that mitigates overfitting and allows learning lightweight GCN architectures\footnote{\scriptsize Adjacency matrices learned,  in the related work, are usually fully dense and this introduces a lot of latency in the underlying GCNs, especially when handling large scale databases (even with reasonable size graphs).}; in contrast to non structured lightweight network design  (e.g., magnitude pruning), our proposed method  (i) captures (through orthogonality and stochasticity) the structural relationships between   parameters in the learned GCNs, and (ii) maintains completeness and minimality of the learned representations by finding the most discriminating and lightweight GCNs as also supported in our experiments.  

\section{Learning lightweight connectivity} 
 
Let $\S=\{\G_i=(\V_i, \E_i)\}_i$ denote a collection of graphs with $\V_i$, $\E_i$ being respectively the nodes and the edges of $\G_i$. Each graph $\G_i$ (denoted for short as $\G=(\V, \E)$) is endowed with a signal $\{\psi(u) \in \mathbb{R}^s: \ u \in \V\}$ and associated with an adjacency matrix $\A$ with each entry  $\A_{uu'}>0$ iff $(u,u') \in \E$ and $0$ otherwise. GCNs aim at learning a set of filters $\F=\{g_\theta=(\V_\theta,\E_\theta)\}_{\theta=1}^C$ that define convolution on $n$ nodes of $\G$ (with $n=|\V|$) as
 \begin{equation}\label{matrixform} 
(\G \star \F)_\V = f\big(\A \  \U^\top  \   \W\big), 
 \end{equation} 
 \noindent  here $^\top$ stands for transpose,  $\U \in \mathbb{R}^{s\times n}$  is the  graph signal, $\W \in \mathbb{R}^{s \times C}$  is the matrix of convolutional parameters corresponding to the $C$ filters and  $f(.)$ is a nonlinear activation applied entrywise. In Eq.~\ref{matrixform}, the input signal $\U$ is projected using $\A$ and this provides for each node $u$, the  aggregate set of its neighbors. When $\A$ is common to  all graphs\footnote{\scriptsize e.g., when considering a common graph structure for all actions in videos.}, entries of $\A$ could be handcrafted or learned so Eq.~(\ref{matrixform}) implements a convolutional block with two layers; the first one aggregates signals in $\N(\V)$ (sets of node neighbors) by multiplying $\U$ with $\A$ while the second layer achieves convolution by multiplying the resulting aggregates with the $C$ filters in $\W$.
 \subsection{Orthogonality-driven connectivity}\label{ortho}
Learning  multiple adjacency matrices (denoted as $\{\A_k\}_{k=1}^K$) allows us to capture different contexts and graph topologies when achieving aggregation and convolution.  With multiple matrices $\{\A_k\}_k$ (and associated convolutional filter parameters $\{\W_k\}_k$),  Eq.~\ref{matrixform} is updated as  

\begin{equation}\label{matrixform2} 
  (\G \star \F)_\V = f\bigg(\sum_{k=1}^K \A_k   \U^\top     \W_k\bigg).
 \end{equation} 
 If aggregation produces,  for a given $u \in \V$,  linearly dependent vectors ${\cal X}_u= \{\sum_{u'} \A_{kuu'}. \psi(u')\}_k$, then convolution will also generate  linearly dependent representations with an overestimated number of training  parameters in the null space of ${\cal X}_u$. Besides, the tensor $\{\A_k\}_k$ used for aggregation,  may also generate overlapping and redundant contexts.\\  Provided that  $\{\psi(u')\}_{u' \in \N_r(u)}$ are  linearly independent, the sufficient condition that makes vectors in ${\cal X}_u$ linearly independent reduces to  constraining $(\A_{kuu'})_{k,u'}$ to lie on the  Stiefel manifold  (see for instance \cite{Yasunori2005,HuangAAAI2017,Ankita2019}) defined as $V_K(\mathbb{R}^{n})=\{ \M \in  \mathbb{R}^{K \times n}: \M \, \M^\top =\I_K\}$  (with $\I_K$ being the $K \times K$ identity matrix) which thereby guarantees  orthonormality and   minimality of  $\{\A_1,\dots,\A_K\}$\footnote{\scriptsize Note that $K$ should not exceed the rank of $\big\{\psi(u')\big\}_{u' \in \N_r(u)}$ which is upper bounded by $\min(|{\cal V}|,s)$; $s$ is again the dimension of the graph signal.}. A less compelling condition is orthogonality, i.e.,  $\langle \A_k,\A_{k'} \rangle_F=0$ and $\A_{k}\geq {\bf 0}_{n\times n}$, $\A_{k'}\geq {\bf 0}_{n \times n}$,  $\forall k \neq k'$ --- with $\langle, \rangle_F$ being the Hilbert-Schmidt (or Frobenius) inner product defined as  $\langle \A_k,\A_{k'} \rangle_F=\tr(\A_k^\top\A_{k'})$ --- and this equates $\A_k\odot \A_{k'}= {\bf 0}_{n\times n} $,  $\forall k\neq k'$ with $\odot$ denoting the entrywise hadamard product and  ${\bf 0}_{n \times n}$ the $n \times n$ null matrix. \\ 

 \indent Considering orthogonality (as discussed above), the tensor  $\{\A_k\}_k$ and $\W=\{ \W_k \}_k$ are learned as 
\begin{equation}\label{matrixform3} 
  \begin{array}{lll}
\displaystyle {\displaystyle \min}_{\{\A_k \geq 0\}_k,\W} \ \ \  & \displaystyle   E\big(\A_1,\dots,\A_K;\W\big) & \\
& & \\
\displaystyle  {\textrm{s.t.}} &  \A_k\odot \A_{k}> {\bf 0}_{n \times n}  &  \\
                                                          &  \A_k\odot \A_{k'}= {\bf 0}_{n \times n}  &      \forall   k, k' \neq k \\
     &  {\bf 1}_n^\top \A_k =  {\bf 1}_n^\top.
\end{array}
\end{equation}
being $E$ the cross entropy loss and  ${\bf 1}_n^\top$ a vector of $n$ ones. In the above minimization problem, the first and the second constraints correspond to orthogonality  while the third one to column-stochasticity. The latter is added in order to ensure that all of the entries in $\A_k$ are positive and each column sums to one; i.e., each matrix $\A_k$ models a Markov chain whose $i$-th row and $j$-th column provides the probability of transition from one node $u_j$ to $u_i$ in $\G$. Note that orthogonality (as designed subsequently) allows learning sparse adjacency matrices while column-stochasticity provides extra sparsity and acts as a structured regularizer that enhances further the generalization power of the learned GCNs\footnote{\scriptsize Without stochasticity, one has to consider a normalization layer (with extra parameters), especially on graphs with heterogeneous degrees in order to reduce the covariate shift and distribute the transition probability evenly through nodes before achieving convolutions.}.
\subsection{Optimization} 

A natural approach to solve Eq.~(\ref{matrixform3}) is to iteratively and alternately  minimize over one matrix while keeping all the others fixed. However --- and besides the non-convexity of the loss --- the feasible set formed by these $O(K^2)$ bi-linear constraints is not convex w.r.t $\{\A_k\}_k$. Moreover, this iterative procedure is computationally expensive as it requires solving multiple instances of constrained projected gradient descent and the number of necessary iterations to reach convergence is large in practice. All these issues make solving this  problem  challenging and computationally intractable even for reasonable values of $K$ and $n$. In what follows, we investigate a  workaround that   optimizes these matrices while guaranteeing their  orthogonality  and stochasticity as a part  of optimization.\\

\noindent {\bf Orthogonality.} Let $\exp(\gamma \hat{\A}_{k}) \oslash (\sum_{r=1}^K \exp(\gamma \hat{\A}_{r}))$ be a softmax reparametrization of $\A_{k}$, with  $\oslash$ being the entrywise hadamard division and  $\{\hat{\A}_k\}_k$ free parameters in $\mathbb{R}^{n \times n}$, it becomes possible to implement orthogonality by choosing large values of  $\gamma$ to make this softmax {\it crisp}; i.e., only one entry $\A_{kij}\gg 0$ while all others $\{\A_{k'ij}\}_{k'\neq k}$ vanishing thereby leading to $\A_k\odot \A_{k'}= {\bf 0}_{n\times n}$,  $\forall   k, k' \neq k$. By plugging this {\it crispmax} reparametrization into Eq.~\ref{matrixform3}, the gradient of the loss $\J$ (now w.r.t   $\{\hat{\A}_k\}_k$)  is updated using the chain rule as
\begin{equation}\label{eq00001111}
  \begin{array}{lll}
\displaystyle      & \displaystyle \frac{\partial \J}{\partial \v(\{\hat{\A}_k\}_k)} &= \displaystyle  \JJ_\textrm{orth} .   \frac{\partial \J}{\partial \v({\{{\A}_k\}_k})},
 \end{array}
                                                                                                                                                                                           \end{equation} 
being $\v(\{\A_k\}_k)$ a vectorization of $\{\A_k\}_k$ and $({\bf i},{\bf j})=(kij,k'i'j')$ an entry of the Jacobian $\JJ_{\textrm{orth}}$ as  
\begin{equation}\label{eq00001112}
  \begin{array}{lll}
   &\displaystyle  \left\{ \begin{array}{ll} \gamma {\A}_{kij}.(1 - \A_{k ij}) &  {\footnotesize \textrm{if} \ k=k', i=i', j=j'} \\ -\gamma {\A}_{kij}.\A_{k' ij} &  {\small \textrm{if} \ k \neq k', i=i', j=j'}   \\ 0 &  \textrm{\small  otherwise,}  \end{array}\right. 
                                                                                                                                                                                         \end{array}
                                                                                                                                                                                           \end{equation} 
                                                                                                                                                                                           \noindent here $\frac{\partial \J}{\partial \v({\{{\A}_k\}_k})}$   is obtained from layerwise gradient backpropagation. However, with this reparametrization, large values of  $\gamma$  may lead to numerical instability when evaluating the exponential. We circumvent this by choosing $\gamma$ that satisfies $\epsilon$-orthogonality: a surrogate property defined subsequently. \\
                                                                                                                                                                                           
\begin{definition}[$\epsilon$-orthogonality]  A basis  $\{\A_k\}_k$ is $\epsilon$-orthogonal if $\A_k\odot \A_{k'} \leq \epsilon  \ \mathds{1}_{n\times n}$, $\forall k,k' \neq k$,  with $\mathds{1}_{n \times n}$ being the $n \times n$ unitary matrix.
\end{definition}

Considering the above definition, (nonzero) matrices belonging to an $\epsilon$-orthogonal basis are linearly independent w.r.t $\langle .,. \rangle_F$ (provided that $\gamma$ is sufficiently large) and  hence this basis  is also minimal. The following proposition provides a tight lower bound on $\gamma$ that satisfies $\epsilon$-orthogonality. \\

\begin{proposition} [$\epsilon$-orthogonality bound] Consider $\{\A_{kij}\}_{ij}$ as the entries of the crispmax reparametrized  matrix  $\A_k$ defined as  $\exp(\gamma \hat{\A}_{k}) \oslash \big(\sum_{r=1}^K \exp(\gamma \hat{\A}_{r})\big)$.  Provided that  $\exists \delta>0:$ $\forall  i,j,\ell'$, $\exists !\ell$,  $\hat{\A}_{\ell ij} \geq  \hat{\A}_{\ell' ij}+\delta$ (with $\ell' \neq \ell$) and if $\gamma$ is at least {$$\displaystyle \frac{1}{\delta} \ln\bigg(\frac{K \sqrt{(1-2\epsilon)}}{1-\sqrt{(1-2\epsilon)}}+1\bigg)$$}
then  $\{\A_1,\dots,\A_K\}$ is $\epsilon$-orthogonal.  
\end{proposition}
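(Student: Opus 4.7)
The plan is to treat the claim position by position: both the crispmax and the $\epsilon$-orthogonality target $\A_k \odot \A_{k'} \leq \epsilon \mathds{1}_{n\times n}$ act entrywise in $(i,j)$, so it suffices to fix an index pair $(i,j)$ and prove $\A_{kij}\A_{k'ij} \leq \epsilon$ for every $k \neq k'$. Let $\ell = \ell(i,j)$ be the unique dominant index provided by the hypothesis. Dividing numerator and denominator of the crispmax by $e^{\gamma\hat{\A}_{\ell ij}}$, I will work with the ratios $b_r := e^{\gamma(\hat{\A}_{r ij} - \hat{\A}_{\ell ij})}$, so that $b_\ell = 1$, $b_r \leq \beta := e^{-\gamma\delta}$ for every $r \neq \ell$, and $\A_{rij} = b_r / S$ with $S := \sum_{r'} b_{r'}$.

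The key scalar I will track is the gap $\A_{\ell ij} - \A_{k' ij} = (1 - b_{k'})/S$ for each $k' \neq \ell$. Using $1 - b_{k'} \geq 1 - \beta$ together with the worst-case bound $S \leq 1 + (K-1)\beta$, attained for the uniform configuration $b_r \equiv \beta$ over $r \neq \ell$, I obtain the uniform lower bound $\A_{\ell ij} - \A_{k'ij} \geq (1-\beta)/(1 + (K-1)\beta)$. Setting $t := \sqrt{1-2\epsilon}$, forcing this gap to be at least $t$ rearranges algebraically to $\beta \leq (1-t)/((K-1)t + 1)$; substituting $\beta = e^{-\gamma\delta}$ and taking logarithms produces exactly the announced threshold $\gamma \geq \frac{1}{\delta}\ln\bigl(Kt/(1-t) + 1\bigr)$.

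It remains to verify that the single condition $\A_{\ell ij} - \A_{k'ij} \geq t$ already implies the entrywise product bound. I will split into two cases. For $k = \ell$, combining the gap with the column stochasticity $\A_{\ell ij} + \A_{k' ij} \leq 1$ forces $\A_{k' ij} \leq (1-t)/2$ and $\A_{\ell ij} \leq (1+t)/2$, hence $\A_{\ell ij}\A_{k'ij} \leq (1-t^2)/4 = \epsilon/2 \leq \epsilon$. For $k, k' \neq \ell$, the gap condition in particular yields $\A_{\ell ij} \geq t$, so $\A_{kij} + \A_{k'ij} \leq 1 - \A_{\ell ij} \leq 1 - t$, and AM--GM gives $\A_{kij}\A_{k'ij} \leq (1-t)^2/4 \leq (1-t^2)/2 = \epsilon$.

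The step I expect to be the main obstacle is choosing the right scalar proxy: the naive route of lower-bounding $\A_{\ell ij}$ alone and invoking $\A_{\ell ij}\A_{k'ij} \leq (1 - \A_{\ell ij}^2)/2$ recovers the correct form $\sqrt{1-2\epsilon}$ but only the weaker denominator $(K-1)t$, whereas tracking the gap $\A_{\ell ij} - \A_{k' ij}$ ties the worst case directly to the sharper denominator $(K-1)t + 1$ stated in the proposition. The only remaining non-routine check is that the uniform configuration really does minimize the gap, which follows from differentiating $(1-b_{k'})/(1 + \sum_r b_r)$ with respect to each $b_r$ and inspecting signs.
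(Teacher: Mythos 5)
Your argument follows essentially the same route as the paper's proof: both work entry by entry, both lower-bound the gap between the dominant softmax entry and any other one by $\frac{1-e^{-\gamma\delta}}{1+(K-1)e^{-\gamma\delta}}$ (which is exactly the paper's $\big(1+\frac{K}{e^{\gamma\delta}-1}\big)^{-1}$), and both convert the requirement that this gap exceed $\sqrt{1-2\epsilon}$ into the stated threshold on $\gamma$. The only structural difference is cosmetic: the paper dominates every product $\A_{kij}\A_{k'ij}$ by that of the top two entries and uses $ab\le\frac12-\frac12(a-b)^2$, whereas you split into pairs that do or do not contain the dominant index $\ell$. One intermediate inequality in your first case is, however, false: from $\A_{\ell ij}+\A_{k'ij}\le 1$ and $\A_{\ell ij}-\A_{k'ij}\ge t$ you cannot conclude $\A_{\ell ij}\le(1+t)/2$ (take $\A_{\ell ij}=0.9$, $\A_{k'ij}=0$, $t=0.5$); that bound would require an \emph{upper} bound on the gap. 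The conclusion of that case survives anyway, since $\A_{\ell ij}\A_{k'ij}=\frac14\big[(\A_{\ell ij}+\A_{k'ij})^2-(\A_{\ell ij}-\A_{k'ij})^2\big]\le\frac{1-t^2}{4}=\epsilon/2$, so the slip is local and one line to repair; the rest of the argument, including the threshold computation and the case of two non-dominant indices, checks out.
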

\begin{proof}  
For any entry $i,j$, one may find $\ell$, $\ell'$ in $\{1,\dots,K\}$ (with $\ell \neq \ell'$) s.t.  $(\A_k \odot \A_{k'})_{ij} $ 
{\hspace*{-1cm} \begin{equation*}
\begin{array}{lll}
\displaystyle  &\leq &  \displaystyle  (\A_\ell \odot \A_{\ell'})_{ij}  \\ 
& & \\
&= &   \frac{1}{2} (\A_{\ell ij}^2 + \A_{\ell' ij}^2) -   \frac{1}{2} (\A_{\ell ij} -\A_{\ell' ij})^2     \\ 
& & \\
\displaystyle    &\leq&     \frac{1}{2}-   \frac{1}{2} (\A_{\ell ij} -\A_{\ell' ij})^2     \\ 
\displaystyle   &= &  \frac{1}{2}-  \frac{1}{2} \bigg( \displaystyle\frac{\exp(\gamma \hat{\A}_{\ell ij})-\exp(\gamma \hat{\A}_{\ell' ij})}{\exp(\gamma \hat{\A}_{\ell ij})+\exp(\gamma \hat{\A}_{\ell' ij})+\sum_{r=3}^K \exp(\gamma \hat{\A}_{rij})}\bigg)^2  \\ 
& &\\
\displaystyle &\leq &   \frac{1}{2} - \frac{1}{2}\bigg(\displaystyle\frac{\exp(\gamma \hat{\A}_{\ell ij}) -\exp(\gamma \hat{\A}_{\ell' ij})}{\exp(\gamma \hat{\A}_{\ell ij}) +(K-1)\exp(\gamma \hat{\A}_{\ell' ij})}\bigg)^2  \\
& & \\
\displaystyle & \leq  &   \frac{1}{2}  -  \frac{1}{2} \bigg(\displaystyle \frac{1}{1+ \frac{K}{\exp(\gamma \delta )-1}}\bigg)^2.
 \end{array}
\end{equation*}
The sufficient condition is to choose $\gamma$ such as  
\begin{equation*}
 \frac{1}{2}  -  \frac{1}{2} \bigg[\displaystyle \frac{1}{1+ \frac{K}{\exp(\gamma \delta )-1}}\bigg]^2 \leq \epsilon \implies \displaystyle \gamma \geq\displaystyle \frac{1}{\delta} \ln\bigg(\frac{K \sqrt{(1-2\epsilon)}}{1-\sqrt{(1-2\epsilon)}}+1\bigg).
 \end{equation*} 
 }
\begin{flushright}
$\blacksquare$
\end{flushright}

\end{proof} 
Following the above  proposition, setting $\gamma$ to the above lower bound guarantees $\epsilon$-orthogonality; for instance, when $K=2$, $\delta=0.01$ and provided that  $\gamma \geq 530$, one may obtain  $0.01$-orthogonality which is almost a strict orthogonality. This property is satisfied as long as one slightly disrupts the entries of  $\{\hat{\A}_k\}_k$ with random noise during training\footnote{\scriptsize whatever  the range of entries in these matrices $\{\hat{\A}_k\}_k$.}. However, this may still lead to another limitation; precisely,  bad local minima are observed  due to an {\it early} convergence to crisp adjacency matrices. We prevent this by steadily annealing the temperature $1/\gamma$  of the softmax through training epochs (using $\frac{\gamma.\textrm{epoch}}{\textrm{max\_epochs}}$ instead of $\gamma$) in order to make optimization focusing first on the loss, and then as optimization evolves, temperature cools down and allows reaching the aforementioned lower bound (thereby crispmax) and $\epsilon$-orthogonality at convergence. \\

\noindent {\bf Lightweight connectivity with stochasticity.} Unless explicitly mentioned, $\A_k$ is simply rewritten as $\A$. We consider  a reparametrization   $\A = h(\hat{\A}) \D( h(\hat{\A}^\top))^{-1}$, with $\D(.)$ being the degree matrix operator, $h$ a strictly monotonic positive function and this allows a free setting of the matrix $\hat{\A}$ during optimization while guaranteeing stochasticity. In practice, $h$ is set to $\exp$ and the original gradient is obtained, similarly to Eq.~\ref{eq00001111}, from layerwise gradient back propagation by multiplying the original gradient by the Jacobian  $[\JJ_{\textrm{stc}}]_{ij,i'j'}= [{\A}_{i'j'}.(\delta_{ii'} - \A_{ij})]$ with $\delta_{ii'}=1_{\{i=i'\}}$. Note that stochasticity, when combined with orthogonality, leightens connectivity by a factor $n$ compared to orthogonality whose factor does not exceed $K$; this combination is obtained by multiplying the underlying Jacobians, so the final gradient becomes  
                            \begin{equation}\label{eq0000}
\displaystyle \frac{\partial \J}{\partial \v(\{\hat{\A}_k\}_k)} = \displaystyle   \JJ_\textrm{stc}. \JJ_\textrm{orth}. \frac{\partial \J}{\partial \v(\{{\A}_k\}_k)},
                                                                                                                                            \end{equation} 
and this order of application is strict, as orthogonality sustains after stochasticity  while the converse is not necessarily guaranteed at the end of the optimization process.

\section{Experiments}
{\bf Database and settings.} We evaluate the performance of our GCN on the task of action recognition using the First-Person Hand Action (FPHA) dataset~\cite{garcia2018}. The latter includes 1175 skeletons belonging to 45 action categories which are performed by 6 different individuals in 3 scenarios. Action categories are highly variable with inter and intra subject variability including style, speed, scale and viewpoint. Each video (sequence of skeletons) is initially described with a handcrafted graph $\G = (\V,\E)$ where each node $v_j \in \V$ corresponds to the $j$-th hand-joint trajectory (denoted as $\{\hat{p}_j^t\}_t$)  and an edge $(v_j, v_i) \in  \E$ exists iff the $j$-th and the $i$-th trajectories are spatially connected. Each trajectory in $\G$ is processed using {\it temporal chunking}: first, the total duration of a  sequence is split into $M$ equally-sized temporal chunks ($M=4$ in practice), then the trajectory  coordinates  $\{\hat{p}_j^t\}_t$  are assigned to the $M$ chunks (depending on their time stamps) prior to concatenate the averages of these chunks; this produces the raw description of $v_j$, again denoted as $\psi(v_j)$. \\
\noindent {\bf Implementation details.} We trained the GCNs end-to-end using the Adam optimizer for 2,800 epochs  with a batch size equal to $600$, a momentum of $0.9$ and a global learning rate (denoted as $\nu(t)$)  inversely proportional to the speed of change of the cross entropy loss used to train our networks; when this speed increases (resp. decreases),   $\nu(t)$  decreases as $\nu(t) \leftarrow \nu(t-1) \times 0.99$ (resp. increases as $\nu(t) \leftarrow \nu(t-1) \slash 0.99$). In all these experiments, we use a GeForce GTX 1070 GPU device (with 8 GB memory), we evaluate the performances using the 1:1 setting proposed in~\cite{garcia2018} with 600 action sequences for training and 575 for testing, and we report the average accuracy over all the classes of actions.\\

\noindent {\bf Performances and comparison.} We compare the performances of our GCN  design against two baselines: handcrafted and learned. In the first baseline (known as power map), all the matrices  $\{ \A_k\}_{k}$  are evaluated upon the adjacency matrix $\A$ (taken from the input skeletons) as $\A_k = \A^{(k)}$ with $\A^{(k)} = \A^{(k-1)} \A$, $\A^{(0)} = \I$  and this defines nested supports for convolutions while  in the second baseline, all the adjacency matrices $\{ \A_k\}_{k}$ are learned using the objective function~(\ref{matrixform3}) but w/o orthogonality and stochasticity constraints. Table~\ref{table21} shows a comparison with these baselines and an ablation study of our complete model and the impact of orthogonality (separately and combined) on the performances. These results show that orthogonality has a clear and a consistent positive impact on the performances  while stochasticity (when combined with orthogonality) provides lightweight GCNs with an extra gain in accuracy. Clearly, these two constraints act as regularizers that also reduce the number of training parameters thereby leading to highly effective and also efficient GCNs. In order to further investigate the impact of these two constraints, we compare the underlying GCNs against lightweight ones obtained differently, with magnitude pruning; the latter consists first in zeroing the smallest parameters in the learned GCNs, and then fine-tuning the remaining parameters. As shown in table~\ref{table21},  lightweight GCNs, trained with orthogonality and stochasticity, clearly outperform those obtained with magnitude pruning+fine-tuning. Finally, we compare the classification performances of our GCN against other related methods in action recognition ranging from sequence based such as LSTM to deep graph (non-vectorial) methods, etc. (see table~\ref{compare2} and references within). From the results in these tables, our GCN brings a noticeable gain w.r.t. related state of the art methods.

 \begin{table}[ht]
 \begin{center}
\resizebox{0.69\columnwidth}{!}{
\begin{tabular}{cc||c|c||c|c||c|c}
  &  & \rotatebox{55}{H} & \rotatebox{55}{L} &     \rotatebox{55}{L+orth} & \rotatebox{55}{L+MP} &  \rotatebox{55}{L+orth+stc} & \rotatebox{55}{L+MP} \\
 \hline
  \hline
  \multirow{2}{*}{\rotatebox{35}{$K=2$}}& Accuracy (\%) &    84.17 & 83.30  &  84.52     & 84.52   & 83.65    & 81.56 \\
 &   Pruning rate (\%) &   none       &   none         &     50        &  50    &   95        & 95\\
  \hline 
   \multirow{2}{*}{\rotatebox{35}{$K=4$}} & Accuracy (\%) &   82.95 & 83.82  &  85.21 &  83.13  & 85.73 & 82.95\\
  & Pruning rate (\%) &   none        &   none         & 75         & 75     &  95           & 95  \\
  \hline 
  \multirow{2}{*}{\rotatebox{35}{$K=8$}}& Accuracy (\%) &  72.69 &  83.82  &  85.04  & 83.65 & \bf86.78 & 84.00 \\
  & Pruning rate (\%) &    none        &   none         &   87          &  87    &    95        & 95\\
  \end{tabular}}
\end{center}
\caption{Detailed performances, for different $K$, using handcrafted and learned connectivity w/o and with our constraints. We also compare these results with those of GCNs obtained using magnitude pruning (for the same pruning rates: $\floor{(1-\frac{1}{K}) \times 100}$ for L+orth vs. L+MP and $\floor{(1-\frac{1}{n})\times 100}$ for L+orth+stc vs. L+MP), here  H, L, orth, stc and MP stands respectively for handcrafted, learned, orthogonality, stochasticity and magnitude pruning.}\label{table21}
\end{table}

\begin{table}[ht]
\begin{center}
  \resizebox{0.59\columnwidth}{!}{
\begin{tabular}{cccc|c}
{\bf Method} & {\bf Color} & {\bf Depth} & {\bf Pose} & {\bf Accuracy (\%)}\\
\hline
  Two stream-color \cite{refref10}   & \cmark  &  \xmark  & \xmark  &  61.56 \\
Two stream-flow \cite{refref10}     & \cmark  &  \xmark  & \xmark  &  69.91 \\ 
Two stream-all \cite{refref10}      & \cmark  & \xmark   & \xmark  &  75.30 \\
\hline 
HOG2-depth \cite{refref39}        & \xmark  & \cmark   & \xmark  &  59.83 \\    
HOG2-depth+pose \cite{refref39}   & \xmark  & \cmark   & \cmark  &  66.78 \\ 
HON4D \cite{refref40}               & \xmark  & \cmark   & \xmark  &  70.61 \\ 
Novel View \cite{refref41}          & \xmark  & \cmark   & \xmark  &  69.21  \\ 
\hline
1-layer LSTM \cite{Zhua2016}        & \xmark  & \xmark   & \cmark  &  78.73 \\
2-layer LSTM \cite{Zhua2016}        & \xmark  & \xmark   & \cmark  &  80.14 \\ 
\hline 
Moving Pose \cite{refref59}         & \xmark  & \xmark   & \cmark  &  56.34 \\ 
Lie Group \cite{Vemulapalli2014}    & \xmark  & \xmark   & \cmark  &  82.69 \\ 
HBRNN \cite{Du2015}                & \xmark  & \xmark   & \cmark  &  77.40 \\ 
Gram Matrix \cite{refref61}         & \xmark  & \xmark   & \cmark  &  85.39 \\ 
TF    \cite{refref11}               & \xmark  & \xmark   & \cmark  &  80.69 \\  
\hline 
JOULE-color \cite{refref18}         & \cmark  & \xmark   & \xmark  &  66.78 \\ 
JOULE-depth \cite{refref18}         & \xmark  & \cmark   & \xmark  &  60.17 \\ 
JOULE-pose \cite{refref18}         & \xmark  & \xmark   & \cmark  &  74.60 \\ 
JOULE-all \cite{refref18}           & \cmark  & \cmark   & \cmark  &  78.78 \\ 
\hline 
Huang et al. \cite{Huangcc2017}     & \xmark  & \xmark   & \cmark  &  84.35 \\ 
Huang et al. \cite{ref23}           & \xmark  & \xmark   & \cmark  &  77.57 \\  

  \hline
Our best (table~\ref{table21})                    & \xmark  & \xmark   & \cmark  & \bf86.78                                                 
\end{tabular}}
\caption{Comparison against  state of the art methods.}\label{compare2}
\end{center}
 \end{table} 

\section{Conclusion} 
We introduce in this paper a novel framework that designs graph topology as a part of an ``end-to-end'' GCN learning. This topology is captured using multiple adjacency matrices whose optimization is constrained with orthogonality and stochasticity. The former makes it possible to remove the redundancy while the latter allows learning lightweight and   highly effective GCNs. These two constraints also act as regularizers that model structural relationships between network parameters in order to enhance both their  generalization and lightweightness. Experiments conducted on the challenging task of skeleton-based hand-gesture   recognition, shows the outperformance of the proposed lightweight GCNs against different baselines as well as the related work.
\newpage


\begin{thebibliography}{12}
      
\bibitem{Cun2015}
  Y. LeCun, Y. Bengio, and G. Hinton. "Deep learning." nature, pages 436--444, 2015.

\bibitem{Krizhevsky2012}
  A.~Krizhevsky, I.~Sutskever, and G.~E. Hinton, \newblock ``Imagenet classification with deep convolutional neural networks,'' \newblock in {NIPS}, vol.~60, pages 1097--1105, 2012.

  \bibitem{Jiu2015}
M.~Jiu and H.~Sahbi, ``Semi supervised deep kernel design for image annotation,'' in \emph{ICASSP}, 2015.



\bibitem{SzegedyCVPR2015}
C.~Szegedy et al. ``Going deeper with convolutions,'' in CVPR, 2015.
\bibitem{He2016}
K.~He, X.~Zhang, S.~Ren, and J.~Sun,
\newblock ``Deep residual learning for image recognition,''
\newblock in {CVPR}, 2016.

\bibitem{He2017}
He et al. "Mask r-cnn." Proceedings of ICCV, 2017.
\bibitem{HuangCVPR2017}
G.~Huang et al., ``Densely connected convolutional networks,''
\newblock in {CVPR}, 2017.
\bibitem{Jiu2016a}
M.~Jiu and H.~Sahbi, ``Laplacian deep kernel learning for image annotation,'' in   \emph{ICASSP}, 2016.

\bibitem{ross2015}
Girshick, Ross. "Fast r-cnn." Proceedings of the IEEE ICCV, 2015.

\bibitem{javad2017}
M. Javad, et al. "Fast YOLO: A fast you only look once system for real-time embedded object detection in video." arXiv:1709.05943 (2017).

\bibitem{ref00002}  H. Sahbi and F. Fleuret. Kernel methods and scale invariance using the triangular kernel. Diss. INRIA, 2004.

\bibitem{zaremba2014}
W. Zaremba, I. Sutskever, and O. Vinyals. "Recurrent neural network regularization." arXiv preprint arXiv:1409.2329 (2014).

\bibitem{zhang2020}
Z. Zhang, P. Cui, and W. Zhu. "Deep learning on graphs: A survey." In IEEE TKDE, 2020.
\bibitem{Jiu2017}
  M.~Jiu and H.~Sahbi, ``Nonlinear deep kernel learning for image annotation,'' \emph{IEEE  Transactions on Image Processing}, vol. 26(4), 2017.



\bibitem{indola2016}
Iandola, Forrest N., et al. "SqueezeNet: AlexNet-level accuracy with 50x fewer parameters and 0.5 MB model size." arXiv:1602.07360 (2016).

\bibitem{Chen2017}
  L-C. Chen et al. "Deeplab: Semantic image segmentation with deep convolutional nets, atrous convolution, and fully connected crfs." TPAMI 40.4 (2017): 834-848.
\bibitem{ref00001}  H. Sahbi and N. Boujemaa. "From coarse to fine skin and face detection." Proceedings of the eighth ACM international conference on Multimedia. 2000.


\bibitem{Long2015}
J. Long, E. Shelhamer and T. Darrell. "Fully convolutional networks for semantic segmentation." In IEEE CVPR, 2015.

\bibitem{indola2014}
Iandola, Forrest, et al. "Densenet: Implementing efficient convnet descriptor pyramids." arXiv:1404.1869 (2014).

\bibitem{Bruna2013}
J. Bruna, W. Zaremba, A. Szlam, Y. LeCun. Spectral networks and locally connected networks on graphs. arXiv preprint arXiv:1312.6203, 2013.
\bibitem{sahbiiccv17}
H. Sahbi. Coarse-to-fine deep kernel networks. Proceedings of the IEEE International Conference on Computer Vision, 1131-1139, 2017. 

\bibitem{Defferrard2016}
M. Defferrard, X. Bresson, P. Vandergheynst. Convolutional neural
networks on graphs with fast localized spectral filtering. In NIPS, pages 3844--3852, 2016.

\bibitem{ref000013} H. Sahbi, P. Etyngier, J-Y. Audibert, R. Keriven (2008, June). Manifold learning using robust graph laplacian for interactive image search. In 2008 IEEE Conference on Computer Vision and Pattern Recognition (pp. 1-8). IEEE.

\bibitem{Huang2018}
W. Huang, T. Zhang, Y. Rong, J. Huang. Adaptive sampling towards fast graph representation learning. In NIPS. pp. 4558-4567 (2018)


\bibitem{Kipf2016}
T.N. Kipf,  M. Welling. Semi-supervised classification with graph convolutional networks. arXiv preprint arXiv:1609.02907, 2016.

\bibitem{ref00003}   A. Dutta and H. Sahbi. "High order stochastic graphlet embedding for graph-based pattern recognition." arXiv preprint arXiv:1702.00156 (2017).

\bibitem{Henaff2015}
M. Henaff, J. Bruna, Y. LeCun. Deep convolutional networks on graphstructured
data. arXiv preprint arXiv:1506.05163 (2015)

\bibitem{Levie2018}
R. Levie, F. Monti, X. Bresson, M.M. Bronstein. Cayleynets: Graph
convolutional neural networks with complex rational spectral filters. IEEE
Transactions on Signal Processing 67(1), 97--109 (2018)

\bibitem{Li2018}
R. Li, S. Wang, F. Zhu, J. Huang. Adaptive graph convolutional neural
networks. In AAAI, 2018.
\bibitem{sahbipr2012}
F. Yuan, G-S. Xia, H. Sahbi, V. Prinet.  Mid-level Features and Spatio-Temporal Context for Activity Recognition. Pattern Recognition.  volume 45, number 12,  4182-4191, 2012
\bibitem{Zhuang2018}
Z. Chenyi and Q. Ma. Dual graph convolutional networks
for graph-based semi-supervised classification. Proceedings of WWW, 2018.

 \bibitem{ref00007} H. Sahbi, L. Ballan, G. Serra, A. DelBimbo (2012). Context-dependent logo matching and recognition. IEEE Transactions on Image Processing, 22(3), 1018-1031.


\bibitem{Chen2018}
J. Chen, T. Ma, C. Xiao. Fastgcn: fast learning with graph convolutional
networks via importance sampling. arXiv preprint arXiv:1801.10247 (2018)
\bibitem{sahbicbmi08}
H. Sahbi, JY. Audibert, J. Rabarisoa, R. Keriven. Object recognition and retrieval by context dependent similarity kernels. International Workshop on Content-Based Multimedia Indexing, 216-223, 2008.  

\bibitem{Gori2005}
M. Gori, G. Monfardini, F. Scarselli. A new model for learning in
graph domains. In IEEE IJCNN, vol. 2, pp. 729--734, 2005.
\bibitem{sahbiicip09}
M. Ferecatu, H. Sahbi. Multi-view object matching and tracking using canonical correlation analysis. 16th IEEE International Conference on Image Processing (ICIP), 2109-2112, 2009. 



\bibitem{Micheli2009}
A. Micheli. Neural network for graphs: A contextual constructive
approach. IEEE TNN 20(3), 498?511 (2009)

\bibitem{Scarselli2008}
F. Scarselli, M. Gori, A.C. Tsoi, M. Hagenbuchner, G. Monfardini. The graph neural network model. IEEE TNN 20(1), 61--80, 2008.

\bibitem{Wu2019}
Z. Wu, S. Pan, F. Chen, G. Long, C. Zhang, P.S. Yu. A comprehensive
survey on graph neural networks. arXiv:1901.00596 (2019).

\bibitem{ref00004}   H. Sahbi and X. Li. "Context-based support vector machines for interconnected image annotation." Asian Conference on Computer Vision. Springer, Berlin, Heidelberg, 2010.


\bibitem{Hamilton2017}
W. Hamilton, Z. Ying, J. Leskovec. Inductive representation learning
on large graphs. In NIPS. pp. 1024--1034 (2017)


\bibitem{WangICLR2018}
T. Wang, R. Liao, J. Ba, and S. Fidler. Nervenet: Learning structured policy with graph neural networks. ICLR, 2018

\bibitem{Loukas2020}
A. Loukas. What graph neural networks cannot learn: depth vs width. ICLR, 2020

\bibitem{Atamna}
A. Atamna, N. Sokolovska and J-C. Crivello. "A Principled Approach to Analyze Expressiveness and Accuracy of Graph Neural Networks." In ISIDA. Springer, 2020. 
\bibitem{sahbiicassp13a}
E. Benhaim, H. Sahbi, and G.  Vitte. "Designing relevant features for visual speech recognition." 2013 IEEE International Conference on Acoustics, Speech and Signal Processing. IEEE, 2013. 

\bibitem{lingsahbi2013}
L. Wang, H. Sahbi. Directed Acyclic Graph Kernels for Action Recognition. Proceedings of the IEEE International Conference on Computer Vision. 2013.

\bibitem{Sandeep2019}
 S. Kumar et al. "A unified framework for structured graph learning via spectral constraints." JMLR 21.22 (2020): 1-60.

\bibitem{Hanjun2017}
Khalil, Elias, et al. "Learning combinatorial optimization algorithms over graphs." In NIPS, 2017.

\bibitem{Marcelo2018}
Prates, Marcelo, et al. "Learning to solve NP-complete problems: A graph neural network for decision TSP." In  AAAI, vol. 33, 2019.

\bibitem{ref00005} N. Boujemaa, F. Fleuret, V. Gouet, H. Sahbi (2004, January). Visual content extraction for automatic semantic annotation of video news. In the proceedings of the SPIE Conference, San Jose, CA (Vol. 6).

\bibitem{Belkin2003}
M. Belkin and P. Niyogi. Lapl eigenmaps for dimensionality reduction and data representation. Neural computation 15.6 (2003): 1373-1396.

\bibitem{dong18}
X. Dong, D. Thanou, M. Rabbat, and P. Frossard, Learning graphs from data: A signal representation perspective, arXiv preprint arXiv:1806.00848, 2018.

\bibitem{Daitch}
S.I. Daitch, J.A. Kelner, and D.A. Spielman, Fitting a graph to vector data, in Proc. of ICML, 2009, pp. 201-208.
\bibitem{JiuPR2019}
M.~Jiu and H.~Sahbi, ``Deep representation design from deep kernel networks,''
  \emph{Pattern Recognition}, vol.~88, pp. 447--457, 2019.


\bibitem{Sardellitti16}
 S. Sardellitti, S. Barbarossa, and P. Di Lorenzo, Graph topology inference based on transform learning, in Proc. of the Global Conf. on Signal and Information Processing, 2016, pp. 356-360.

\bibitem{Sardellitti19}
S. Sardellitti et al., Graph topology inference based on sparsifying
transform learning. IEEE TSP, 67(7), 2019.
\bibitem{icassp2017b}
M. Jiu and H. Sahbi. Deep kernel map networks for image annotation. IEEE International Conference on Acoustics, Speech and Signal Processing (ICASSP), 2016.


\bibitem{Valsesia18}
D. Valsesia, G. Fracastoro, and E. Magli, Sampling of graph signals via randomized local aggregations, arXiv preprint arXiv:1804.06182, 2018.

\bibitem{Kalofolias}
V. Kalofolias, How to learn a graph from smooth signals, in Proc. of the conf. on Artificial Intelligence and Statistics, 2016, pp. 920-929.

\bibitem{Egilmez}
H.E. Egilmez, E. Pavez, and A. Ortega, Graph learning from data under structural and laplacian constraints, arXiv preprint arXiv:1611.05181,2016.
\bibitem{lingsahbieccv2014}
 L. Wang, H. Sahbi. Nonlinear Cross-View Sample Enrichment for Action Recognition. European Conference on Computer Vision. Springer, 2014.


\bibitem{Chepuri}
S.P. Chepuri, S. Liu, G. Leus, and A.O. Hero, Learning sparse graphs under smoothness prior, in ICASSP, 2017.

\bibitem{Dong2016}
X. Dong, D. Thanou, P. Frossard, and P. Vandergheynst, Learning Laplacian matrix in smooth graph signal representations, IEEE TSP, 64(23), pages 6160--6173, 2016.

\bibitem{Pasdeloup2017}
B. Pasdeloup, V. Gripon, G. Mercier, D. Pastor, and M. G. Rabbat. Characterization and inference of graph diffusion processes from observations of stationary signals. IEEE TSIPN, 2017.
\bibitem{lingsahbiicip2014}
L. Wang, H. Sahbi. Bags-of-Daglets for Action Recognition. IEEE International Conference on Image Processing (ICIP), 2014.




\bibitem{Thanou2017}
D. Thanou, X. Dong, D. Kressner, and P. Frossard. Learning heat diffusion graphs. IEEE TSIPN, 3(3):484--499, 2017.

\bibitem{LeBars19}
  B. Le Bars et al.  Learning Laplacian Matrix from Bandlimited Graph Signals.  In ICASSP, 2019

\bibitem{kipf17} 
TN. Kipf, M. Welling. Semi-supervised classification with graph convolutional networks. In ICLR, 2017

\bibitem{ref00006}  H. Sahbi and F. Fleuret. Scale-invariance of support vector machines based on the triangular kernel. Diss. INRIA, 2002.


\bibitem{Yaguang19} 
Yaguang Li, Chuizheng Meng, Cyrus Shahabi, Yan Liu. Structure-informed Graph Auto-encoder for Relational Inference and Simulation. ICML, 2019

\bibitem{Kipf18} 
T Kipf, E Fetaya, KC Wang, M Welling, R Zemel. Neural Relational Inference for Interacting Systems. ICML, 2018

\bibitem{Sahbi2015}
H.~Sahbi, ``Imageclef annotation with explicit context-aware kernel maps,''   \emph{International Journal of Multimedia Information Retrieval}, pp.  113--128, 2015.



\bibitem{Ferran18} 
Ferran Alet, Adarsh K. Jeewajee, Maria Bauza, Alberto Rodriguez, Tomas Lozano-Perez, Leslie Pack Kaelbling. Graph Element Networks: adaptive, structured computation and memory. ICML, 2018

\bibitem{Alet18} 
Alet, F., Lozano-Perez, T., and Kaelbling, L. P. Modular meta-learning. In Proceedings of The 2nd Conference on Robot Learning, pp. 856--868, 2018.

\bibitem{Luca19} 
Luca Franceschi and Mathias Niepert and Massimiliano Pontil and Xiao He. Learning Discrete Structures for Graph Neural Networks. ICML, 2019. 

\bibitem{Sahbi2013icvs}
H.~Sahbi, ``Explicit context-aware kernel map learning for image annotation,''  in \emph{ICVS}, 2013.


\bibitem{Chen20} 
  Yu Chen et al., Deep Iterative and Adaptive Learning for Graph Neural Networks, AAAI DLGMA, 2020.

\bibitem{Li18} 
C. Li, Q. Zhong, D. Xie, and S. Pu. Co-occurrence feature learning from skeleton data for action recognition and detection with hierarchical aggregation. arXiv preprint arXiv:1804.06055, 2018.


\bibitem{ref00009}  H. Sahbi, D. Geman, N. Boujemaa. "Face detection using coarse-to-fine support vector classifiers." Proceedings. International Conference on Image Processing. Vol. 3. IEEE, 2002.

\bibitem{Chenyi18} 
  Z. Chenyi and Q. Ma. Dual graph convolutional networks for graph-based semi-supervised classification. Proceedings of WWW, 2018.

\bibitem{YLi2018}
Y. Li, R. Yu, C. Shahabi, and Y. Liu, Diffusion convolutional recurrent
neural network: Data-driven traffic forecasting. In Proc. of ICLR, 2018.

\bibitem{Yasunori2005}
N. Yasunori. A note on Riemannian optimization methods on the Stiefel and
the Grassmann manifolds. In NOLTA, vol 1, pages 349--352, 2005.

\bibitem{sahbikpca06}
H Sahbi.  Kernel PCA for similarity invariant shape recognition. Neurocomputing 70 (16-18), 3034-3045

\bibitem{HuangAAAI2017}
L. Huang, X. Liu, B. Lang, A. W. Yu, Y. Wang, and B. Li. Orthogonal weight normalization: Solution to optimization over multiple dependent Stiefel manifolds in deep neural networks. In AAAI, 2017.

\bibitem{Ankita2019}
A. Shukla, S. Bhagat, S. Uppal, S. Anand, P. Turaga. PrOSe: Product of Orthogonal Spheres Parameterization for Disentangled Representation Learning. In BMVC, 2019.  

\bibitem{thiemert2005applying}
S.~Thiemert, H.~Sahbi, and M.~Steinebach, ``Applying interest operators in  semi-fragile video watermarking,'' in \emph{Security, Steganography, and   Watermarking of Multimedia Contents VII}, vol. 5681.\hskip 1em plus 0.5em   minus 0.4em\relax International Society for Optics and Photonics, 2005, pp.  353--363.

\bibitem{Adam2014}
 D.P. Kingma, and J. Ba. "Adam: A method for stochastic optimization." arXiv preprint arXiv:1412.6980 (2014)

\bibitem{Sahbi2011}
H.~Sahbi, J.-Y. Audibert, and R.~Keriven, ``Context-dependent kernels for  object classification,'' \emph{IEEE Transactions on Pattern Analysis and   Machine Intelligence}, vol.~33, pp. 699--708, 2011.


\bibitem{garcia2018}
  G. Garcia-Hernando, S. Yuan, S. Baek, and T.-K. Kim. First Person Hand Action Benchmark with RGB-D Videos and 3D Hand Pose Annotations. In CVPR, 2018.

  \bibitem{refref10} 
  C. Feichtenhofer, A. P., and A. Zisserman. Convolutional Two-Stream Network Fusion for Video Action Recognition. In CVPR,  2016.

\bibitem{sahbijstars17}
Q. Oliveau, H. Sahbi. Learning attribute representations for remote sensing ship category classification.  IEEE Journal of Selected Topics in Applied Earth Observations and Remote Sensing, 2017. 

  \bibitem{refref39}
  E.Ohn-Barand, M.M.Trivedi. Hand Gesture Recognition in Real Time for Automotive Interfaces: A Multimodal Vision- Based Approach and Evaluations. IEEE TITS, 15(6), pages 2368--2377, 2014.

\bibitem{ref00008}  N. Bourdis, D. Marraud, H. Sahbi. "Constrained optical flow for aerial image change detection." 2011 IEEE International Geoscience and Remote Sensing Symposium. IEEE, 2011.

  
   \bibitem{refref40} 
  O. Oreifej and Z. Liu. HON4D: Histogram of Oriented 4D Normals for Activity Recognition from Depth Sequences. In CVPR, pages 716--723, 2013.
   \bibitem{refref41}  
  H. Rahmani and A. Mian. 3D Action Recognition from Novel Viewpoints. In CVPR, pages 1506--1515, 2016.


\bibitem{sahbijmlr06}
H. Sahbi, D. Geman. A hierarchy of support vector machines for pattern detection. Journal of Machine Learning Research 7.Oct (2006): 2087-2123.



  \bibitem{Zhua2016} 
  W. Zhu et al., Co-occurrence feature learning for skeleton based action recognition using regularized deep LSTM networks In AAAI, 2016.

  \bibitem{refref59} 
  M. Zanfir et al., The Moving Pose: An Efficient 3D Kinematics Descriptor for Low-Latency Action Recognition and Detection. In ICCV, 2013.

  \bibitem{Vemulapalli2014}
  R. Vemulapalli, F. Arrate, and R. Chellappa. Human action recognition by representing 3D skeletons as points in a Lie group. In IEEE CVPR, 2014.

  \bibitem{ref000010}  H. Sahbi. "CNRS-TELECOM ParisTech at ImageCLEF 2013 Scalable Concept Image Annotation Task: Winning Annotations with Context Dependent SVMs." CLEF (Working Notes). 2013.

  \bibitem{Du2015}  
  Y. Du et al., Hierarchical recurrent neural network for skeleton based action recognition. In IEEE CVPR, 2015.


\bibitem{sahbiigarss12a}
N. Bourdis, D. Marraud, H. Sahbi.  "Camera pose estimation using visual servoing for aerial video change detection." 2012 IEEE International Geoscience and Remote Sensing Symposium. IEEE, 2012.


  \bibitem{refref61}  
  X. Zhang et al., Efficient Temporal Sequence Comparison and Classification Using Gram Matrix Embeddings on a Riemannian Manifold. In CVPR, 2016.

  \bibitem{ref000011}  H. Sahbi. "A particular Gaussian mixture model for clustering and its application to image retrieval." Soft Computing 12.7 (2008): 667-676.

 \bibitem{refref11}  
  G. Garcia-Hernando and T.-K. Kim. Transition Forests: Learning Discriminative Temporal Transitions for Action Recognition. In CVPR, 2017.


\bibitem{ref000012} H. Sahbi, J-Y. Audibert, R. Keriven. "Graph-cut transducers for relevance feedback in content based image retrieval." 2007 IEEE 11th International Conference on Computer Vision. IEEE, 2007.

  
  \bibitem{refref18}   
  J. Hu et al., Jointly Learning Heterogeneous Features for RGB-D Activity Recognition. In CVPR, 2015

  \bibitem{Huangcc2017} 
  Z. Huang and L. V. Gool. A Riemannian Network for SPD Matrix Learning. In AAAI, 2017


\bibitem{Vo2012}
P.~Vo and H.~Sahbi, ``Transductive kernel map learning and its application to image annotation,'' in \emph{BMVC}, 2007.

\bibitem{ref23}
 Z. Huang, J. Wu, and L. V. Gool. Building Deep Networks on Grassmann Manifolds. In AAAI, 2018

\bibitem{ref000014}   T. Napoléon and H. Sahbi. "From 2D silhouettes to 3D object retrieval: contributions and benchmarking." EURASIP Journal on Image and Video Processing 2010 (2010): 1-17.

  
\end{thebibliography}
\end{document}